\DeclareMathOperator*{\argmax}{argmax}
\DeclareMathOperator*{\argmin}{argmin}
\theoremstyle{definition}
\newcommand{\N}{\mathbb{N}}
\def\ba #1\ea{\begin{align} #1 \end{align}}
\def\bas #1\eas{\begin{align*} #1 \end{align*}}
\def\bml #1\eml{\begin{multline} #1 \end{multline}}
\def\bmls #1\emls{\begin{multline*} #1 \end{multline*}}
\newcommand{\cS}{\mathbf{S}}
\newcommand{\cA}{\mathbf{A}}
\newtheorem{thm}{Theorem}[section]
\newtheorem{lem}[thm]{Lemma}
\newtheorem{cor}[thm]{Corollary}
\theoremstyle{remark}
\theoremstyle{definition}
\newtheorem{dfn}[thm]{Definition}
\begin{document}

\title{Structural Similarity for Improved Transfer in Reinforcement Learning \\
\thanks{This research was funded by the DARPA Lifelong Learning Machines (L2M) program.}
}

\author{\IEEEauthorblockN{C. Chace Ashcraft}
\IEEEauthorblockA{\textit{Research and Exploratory Development} \\
\textit{Johns Hopkins University Applied Physics Lab}\\
Laurel MD, USA \\
chace.ashcraft@jhuapl.edu}
\and
\IEEEauthorblockN{Benjamin Stoler}
\IEEEauthorblockA{\textit{Research and Exploratory Development} \\
\textit{Johns Hopkins University Applied Physics Lab}\\
Laurel MD, USA \\
benjamin.stoler@jhuapl.edu}
\and
\IEEEauthorblockN{Chigozie Ewulum}
\IEEEauthorblockA{\textit{Research and Exploratory Development} \\
\textit{Johns Hopkins University Applied Physics Lab}\\
Laurel MD, USA \\
chigozie.ewulum@jhuapl.edu}
\and
\IEEEauthorblockN{Susama Agarwala}
\IEEEauthorblockA{\textit{Research and Exploratory Development} \\
\textit{Johns Hopkins University Applied Physics Lab}\\
Laurel MD, USA \\
susama.agarwala@jhuapl.edu
}
}

\maketitle

\begin{abstract}

Transfer learning is an increasingly common approach for developing performant RL agents. However, it is not well understood how to define the relationship between the source and target tasks, and how this relationship contributes to successful transfer. We present an algorithm called \emph{Structural Similarity for Two MDPS}, or \emph{SS2}, that calculates a state similarity measure for states in two finite MDPs based on previously developed bisimulation metrics, and show that the measure satisfies properties of a distance metric. Then, through empirical results with GridWorld navigation tasks, we provide evidence that the distance measure can be used to improve transfer performance for Q-Learning agents over previous implementations.

\end{abstract}

\begin{IEEEkeywords}
task similarity, bisimulation, reinforcement learning, transfer learning
\end{IEEEkeywords}

\section{Introduction} \label{sec:intro}

The last decade has seen impressive results in reinforcement learning (RL), particularly for deep reinforcement learning (DRL) \cite{dqn2015, openai2019dota, alphastar}. However, many of these algorithms tend to be computationally expensive and produce policies that are brittle to shifts in their deployment distributions. One potential solution to both of these problems is to leverage transfer learning \cite{carroll2005task, pan2009survey, zhu2020transfer}, i.e., using previously trained policies to jumpstart learning in a new task, or by developing curricula \cite{curriculum, narvekar2020curriculum} of tasks to facilitate the learning of advanced skills or behaviors. 

To best leverage transfer, the task being transferred from should be \emph{similar} to the task being transferred to. Yet what makes one task similar to another in RL, aside from good transfer \cite{JMLR:v10:taylor09a, visus2021taxonomy}, is currently not well understood, nor how to exploit such similarity to facilitate transfer learning.



In this work, we propose a new approach, based on \cite{wang2019} and \cite{song2016}, to estimate state and action similarities between two tasks (where each task is considered to be a Markov Decision Process, or MDP), and then leverage those similarities for knowledge transfer in Q-learning. We show that our approach, referred to as \emph{Structural Similarity for Two MDPs} or SS2, satisfies the properties of a similarity metric, compare SS2-based knowledge transfer to prior methods \cite{song2016}, and propose ways in which the prior methods can be augmented by SS2. We assess task similarity and knowledge transfer on a set of gridworld RL tasks, and show that the SS2-augmented methods outperform all other tested methods. 

We also discuss some observations regarding task distances such as those proposed by \cite{song2016}, limitations of our proposed algorithms, and possible future work. Notably, reducing a set of state similarities into a single value for task similarity, as done in \cite{song2016} using the Hausdorff and Kantorovich distances, did not correlate well with transfer in our experiments.

\section{Related Work}

Task similarity measures tend to fall into two categories: model-based and performance-based \cite{visus2021taxonomy}. 

\emph{Model-based metrics} tend to focus on similarities in the structures of the MDPs, e.g. bisimulation metrics \cite{ferns2004metrics}. Song et al. \cite{song2016} introduced a metric $d'$ that  extends this bisimulation metric to compare similarity between separate MDPs \cite{song2016}, but their metric requires that two MDPs be homogeneous, i.e., having equivalent state representations, a one-to-one correspondence between action spaces, and by satisfying a condition they refer to as \emph{reward-linked}. Their metric iteratively builds a matrix of distances between states by adding the magnitude of the difference of expected rewards of actions between states and the Kantorovich distance, or earth mover's distance (EMD), between the probability distributions of the corresponding actions between the two states. They then propose algorithms that leverage $d'$ to transfer knowledge from a Q function learned from one task to initialize a Q table for another task to accelerate learning. The Q table of the new task is initialized either with Q values from similar states in the trained task (what the authors refer to as the ``state method''), or with a sum of Q values from similar states weighted by their similarity as computed using $d'$ (the ``weight method''). For simplicity, we instead refer to these algorithms as \texttt{T-STATE} and \texttt{T-AVG}, respectfully. We discuss these algorithms in more detail in Section \ref{sec:experiment}.

Wang et. al. \cite{wang2019} proposed a similar solution to quantify the similarities between states and actions within a single MDP by utilizing the Hausdorff distance between out-neighbors of states and the EMD between out-neighbors of actions in a bipartite MDP graph. Their solution generates two matrices of similarities, one for state similarities and one for action similarities, but for states and actions within a single MDP. 

In \cite{zhang2020learning}, Zhang et al. also utilized a bisimulation metric in order to disassociate task-irrelevant information between observations within MuJoCo~\cite{todorov2012mujoco} tasks and high-fidelity highway driving tasks using CARLA, an open urban driving simulator \cite{dosovitskiy2017carla}.

\emph{Performance-based metrics}, on the other hand, aim to find similarities based on transfer performance between tasks. Carrol et. al. \cite{carroll2005task} proposes a task library system as a part of the \emph{lifelong learning} \cite{thrun1998lifelong} paradigm that allows an agent to improve its learning ability as it is exposed to successive tasks through task localization, similarity discovery, and task transfer. They propose multiple task similarity measures: similarity based on rewards, similarity based on number of states with identical policy, mean square error between Q-values, and mean squared error between rewards (R values). Castro et. al. \cite{castro2019scalable} expanded upon the preexisting bisimulation metrics by creating two metrics that are less computationally intensive, with one having guaranteed convergence and the other being able to learn an approximation for continuous state MDPs. However, these require an actively learning policy to operate on the MDP in order to compute their measures, therefore we categorize it here. Other examples in this vein include \cite{mahmud2013clustering} and \cite{castro2021mico}.  

Some methods additionally include the learning of the metric itself. Works such as \cite{ammar2014automated} and \cite{serrano2021inter} propose metrics that require training models to extract useful information on source and target tasks. Serrano et. al.'s method performs operations on Q tables of source and target tasks, while Ammar et. al.'s method utilizes restricted Boltzmann machines on the source and target tasks, allowing a modelling of the behavioural dynamics of the two MDPs to be used as a similarity measure on tasks within the domain the measure was trained on.

\section{Preliminaries}

\subsection{MDPs and MDP Graphs}

In this work we primarily follow the definitions and notation from Wang et. al. \cite{wang2019}, so we define an MDP as $M=(S, A, P, R)$, where $S$ is a finite set of states, $A$ is a finite set of actions, $P: S \times A \times S \longrightarrow [0, 1]$ is the transition function providing the probabilities of an agent going to a state $s' \in S$ given it is currently in state $s \in S$ and took action $a \in A$, and finally, $R: S \times A \times S \longrightarrow [0, 1]$ is the reward function for taking action $a$ while in state $s$ and transitioning to $s'$. We note that, while common definitions of MDPs include a discount factor, usually denoted $\gamma$, as with the Song and Wang approaches, we do not make explicit use of this value. The same can be said for the initial agent state $s_0 \in S$. 

We define the MDP graph of an MDP \\$M=(S,A,P,R)$ as $G_M=(V,\Lambda,E,\Psi,p,r)$. $G_M$ is a heterogeneous directed bipartite graph with two types of nodes, \emph{state} nodes and \emph{action} nodes. $V$ denotes the set of state nodes, and $\Lambda$ denotes the set of action nodes. $E$ is the set of decision edges from state nodes to action nodes, and $\Psi$ is the set of transition nodes from action nodes to state nodes. Transition edges are weighted by the transition probabilities $p$ and by rewards $r$. Note that there is a one-to-one correspondence between an MDP and its constructed graph as described above. For more details and facts about constructing an MDP graph from an MDP, see \cite{wang2019}, as well as our experiment methodology in Section \ref{sec:experiment}.

\subsection{Reinforcement Learning and Transfer}

The aim of reinforcement learning is to find an optimal policy $\pi*$ for a given MDP $M$, utilizing the information gained by interacting with the MDP, particularly by discovering rewards in the MDP and maximizing them. Again following notation from \cite{song2016}, given an MDP $M = (S, A, P, R)$, we define a policy $\pi: S \times A \longrightarrow [0, 1]$ as a mapping from state, action pairs to reward values in the interval $[0, 1]$. The optimal policy can be characterized as

\begin{align*}
    \pi^* &= \argmax_{\pi} E_{\pi} \left[ \sum_{t=0}^\infty \gamma^t r_t | s_0 = s \right] \\
          &= V^{\pi}(s) 
\end{align*}
where $\gamma$ is a discount factor, $r_t$ is the reward at time $t$, and $s_0$ is the starting state of $M$ ($\gamma$ and $s_0$ are often included in the definition of the MDP, but in this case we omit them). $V^{\pi}: S \longrightarrow \mathcal{R}$ is called the \emph{value function} of the policy $\pi$, and is the expected cumulative reward of following $\pi$ from state $s$ onward. Deviating slightly in notation, we say $P(s, a, s')$ is the probability of transitioning from state $s$ to state $s'$ given that agent takes action $a$.

\section{Structural Similarity for Two MDPs (SS2)}

We define our distance between two given MDPs mostly following the recursion given in \cite{wang2019}'s implementation, which we will call SS1 for the remainder of this section, but with some notable differences. We construct a similarity matrix from an MDP graph composed of two disconnected components, each component being the MDP graph of the respective MDP. We also impose some additional restrictions on the initial conditions of SS1's recursion, and then use only a submatrix of the final similarity matrix for the final distance measure; doing so allows us to establish properties of that sub-matrix that are desirable, as shown in \cite{wang2019}. In Section \ref{sec:alg-simp} we provide a simplification of this algorithm wherein we can  compute the sub-matrix directly. For the remainder of this work, we will refer to this metric as Structural Similarity for two MDPs\footnote{In theory, this could be extended to more than two MDPs, but we only consider two in this work.}, or SS2, for short.

\subsection{Foundation}

Let $M=(S_M, A_M, P_M, R_M)$ and $N=(S_N, A_N, P_N, R_N)$ be two MDPs. We represent both as MDP graphs $G_M = (V^M, \Lambda^M, E^M, \Psi^M, p^M, r^M)$ and $G_N = (V^N, \Lambda^N, E^N, \Psi^N, p^N, r^N)$ as established for SS1\footnote{It is worth noting here that while the action space is a finite size, say $\sigma$, each action available to the agent at each state is its own unique node. So if all $\sigma$ actions are available to the agent at each state, there would be $\sigma \cdot |S|$ total action nodes.}. Then, let $G = G_M \sqcup G_N = (V, \Lambda, E, \Psi, p, r)$ be the disjoint union of the two graph MDPs. This represents a situation where there are two sets of states, $V^M$ and $V^N$, such that if one starts in one set, there are no sets of actions such that one can land in a state in the other set. We desire to establish the state and action distance measures:

\begin{align}
    \delta_{\hat{S}}(u, v) \vcentcolon= 1 - \sigma_{\hat{S}}(u, v) \quad \forall u, v \in V 
\end{align}
\begin{align}
    \delta_{\hat{A}}(u, v) \vcentcolon= 1 - \sigma_{\hat{A}}(\alpha, \beta) \quad \forall \alpha, \beta \in \Lambda 
\end{align}
where $\sigma_{\hat{S}}$ and $\sigma_{\hat{A}}$ are the state and action similarities for state space ${\hat{S}} = S_M \cup S_N$ and action space ${\hat{A}} = A_M \cup A_N$, respectfully. In practice, these functions will be defined as matrices, i.e. $\sigma_S(u, v) = \cS [u_{idx}, v_{idx}]$, where $\cS$ is a matrix, $u_{idx}$ and $v_{idx}$ denote the row and column of $\cS$ representing $u$ and $v$, respectively, and $\cS[i,j]$ is the $(i,j)^{th}$ entry of $\cS$. For $G$ as defined above, we see that $\sigma_{\hat{S}} = \cS^G$ takes on a useful form.

Let $\cS^M$ and $\cS^N$ be similarity matrices for the states of $M$ and $N$ respectively. Let $\cS^{M,N}$ (respectively, $\cS^{N,M}$) be the similarity matrix between the two MDPs with rows corresponding to the states of $M$ and the columns the states of $N$ (respectively, rows corresponding to states of $N$ and columns corresponding to states of $M$). Similarly we define matrices $\cA^M$, $\cA^N$, $\cA^{M,N}$ and $\cA^{N,M}$ with rows and columns corresponding to states in $M$ and states in $N$ as appropriate. Then one may write the comparison matrix of G as follows: \bas \cS^G = \begin{bmatrix} \cS^M & \cS^{M,N} \\ \cS^{N,M} & \cS^N \end{bmatrix} \quad ; \quad \cA^G = \begin{bmatrix} \cA^M & \cA^{M,N} \\ \cA^{N,M} & \cA^N \end{bmatrix} \eas 

It is worth noting here that the sub-matrix $\cS^{M,N}$ represents the distances between the state nodes of the MDP graph of M and MDP graph of N, and similarly with $\cA^{M,N}$ and the action nodes in each MDP graph. Thus, the real final product we are truly interested in is $\cS^{M,N}$, but for now we consider the computation of $\cS^G$ in order to leverage results from SS1 and make the argument that $\cS^G$ is, in fact, a distance metric for MDP states. 

\subsection{Base Cases}

For $u, v \in \cS^G$, we set the similarity of a node to itself as 1. Similarly, two absorbing nodes are maximally similar, while absorbing and non-absorbing nodes are as dissimilar as possible. In other words, let $\cS^{G, t}_{u, v}$ denote the $(u, v)^{th}$ entry of the similarity matrix for graph $G$ at time step $t$, then 
\bas \cS^{G,0}_{u,v} =  \begin{cases}1 & u = v \\ 
                                                      \omega  & N_u = N_v = \emptyset \\ 
                                                      0 &  N_u = \emptyset \textrm{ xor } N_v = \emptyset \\
                                                      d_{uv} \in [0,1] & u \neq v \end{cases}  \eas%
where $\omega \in (0, 1]$ denotes the maximal similarity value assigned to any two \emph{different} states. For example, if two states are exactly the same but are in two different connected components of the graph, their similarity would be $\omega$. 

As with SS1, we allow freedom in the initialization of similarity between non-absorbing pairs; however, we also impose one additional requirement. We insist that $\cS^G$ be initialized \emph{such that the entries in the matrix $1-\cS^{G,0}$ define a distance function on the nodes}. Namely, we insist that $d_{uv} = d_{vu}$ and that for all triples $u, v, w \in \cS_G$, $\big(1-\cS^{G,0}_{u,v} \big) + \big(1-\cS^{G,0}_{v,w}\big) \geq \big(1-\cS^{G,0}_{u,w} \big)$. We impose these constraints in order to ensure convergence of the recursion and to ensure $\cS$ and $\cA$ represent distances (more details are provided in the proof of Theorem \ref{res:goodinitcond}). We follow the same procedure for $\cA^G$.

\subsection{Distance equations}

In this section we restate the distance functions on which SS1 and our recursion algorithms depend. Namely, a reward distance $\delta_{rwd}$, the Hausdorff distance $\delta_{Haus}$, and the earth mover's distance $\delta_{EMD}$. Let $N_x$, for $x \in V^M \cup \Lambda^M \cup V^N \cup \Lambda^M$, denote the out neighbors of $x$, then, for action node $\alpha$ and another set of action nodes $N$, define $\delta_A(\alpha, N) \vcentcolon= \min_{\beta \in N} \delta_A(\alpha, \beta)$. The previously listed distance equations are then defined as follows:

\begin{equation}
\begin{aligned}
    \delta_{rwd} = |\mathbb{E}[r_\alpha] - \mathbb{E}[r_\beta]|
\end{aligned}
\end{equation}
\begin{equation}
\begin{aligned}
    \delta_{Haus}(u, v; \delta_A) = \max_{\substack{\alpha \in N_u \\  \beta \in N_v }} \{\delta_A(\alpha, N_v), \delta_A(\beta, N_u)\}
\end{aligned}
\end{equation}
\begin{equation}
\begin{aligned}
    \delta_{EMD}(p_\alpha, p_\beta; \delta_S)=& \min_{\mathbf{F}} 
                                                \sum_{u \in N_\alpha} \sum_{v \in N_\beta}
                                                f_{u, v} \delta_S(\alpha, \beta)\\%
                &\textrm{s.t.} \: \: \: \forall u, v \in V : f_{u,v} \geq 0,\\%
                &\quad \ \ \, \forall u \in V : \sum_{v \in V} f_{u,v} = p(\alpha, u), \\
                &\quad \ \ \, \forall v \in V : \sum_{u \in V} f_{u,v} = p(\beta, v).
\end{aligned}
\end{equation}

With this distance functions defined, we can restate the equations used in SS1's recursion. Choose values $C_S, C_A \in (0, 1)$, where $C_S$ can be thought of as an upper bound on state similarities given their action similarities are equal, and $C_A$ can be thought of as a weighting factor between between magnitudes of contribution of reward differences and state transition differences to final distance between actions. The state updates are given by the following equations:

\begin{equation}
    \label{eq:state_sim}
    \begin{aligned}
    \cS_{u, v}^{G, i} = C_S(1 - \delta_{Haus}(u, v; 1 - \cA^{G, i-1}))
    \end{aligned}
\end{equation}
and
\begin{equation}
    \label{eq:action_sim}
    \begin{aligned}
    \cA^{G,i}_{\alpha, \beta} = 1 &- (1 - C_A)\delta_{rwd}(\alpha,\beta) \\
             &- C_A\delta_{EMD}(\alpha, \beta; 1-\cS^{G,i-1}).
    \end{aligned}
\end{equation}

\subsection{Algorithm}

Combining definitions from previous sections, we are able to state our algorithm for determining inter-state distances as Algorithm \ref{alg:song-based}. While we do not specify any specific convergence criteria, it is worth noting that the exact results produced Algorithm \ref{alg:song-based} will vary based on the criteria chosen. However, iterating the name number of times produces consistent results. 

The algorithm requires as input the specification of the graph MDP $(V, \Lambda, E, \Psi, p, r)$, values for $C_S, C_A$, and the initialized values of $S^G$, $A^G$ as previously described. For the remainder of this document, we assume the value of $\omega$ for each $S^{G,0}$, $A^{G, 0}$ is set to $C_S$. Initializing $S^G$ and $A^G$ this way ensures convergence to a unique solution as shown in the proof of Lemma \ref{res:goodinitcond} and the following Corollary. 

This choice of $\omega$ is further justified by considering the output of the Hausdorff distance when the nodes in the input have no out-neighbors. This results in the distance between nodes in two empty sets, which in theory, is trivially $0$. Therefore, the outcome of Equation \ref{eq:state_sim} will always be $C_S$. A similar argument can be made for the case where only one set is non-empty, in which case the Hausdorff distance is theoretically infinity, or in our case, the max distance of 1, which results in Equation \ref{eq:state_sim} being $0$. 

Finally, we emphasize that the range of the reward function must be constrained to be within $[0,1]$. For tasks where this is not the default reward scheme, the reward function can simply be normalized to fit into this interval for the purpose of this similarity (or distance) computation.

\begin{algorithm}
\caption{Structural Similarity for two MDPs (\texttt{SS2})}\label{alg:song-based}
    \begin{algorithmic}[1]
    \REQUIRE Graph MDP = $(V, \Lambda, E, \Psi, p, r)$
    \REQUIRE $C_s, C_a \in (0, 1)$, $S^{G,0}$, $A^{G, 0}$
    \ENSURE $(1-\cS_{u, v}^{G, 0}) + (1-\cS_{v, w}^{G, 0}) \geq (1-\cS_{u, w}^{G, 0})$; \\ 
            $\qquad \qquad \qquad \qquad \qquad \qquad \qquad \quad \forall u,v,w \in V$
    \ENSURE $(1-\cA_{u, v}^{G, 0}) + (1-\cA_{v, w}^{G, 0}) \geq (1-\cA_{u, w}^{G, 0})$; \\ 
            $\qquad \qquad \qquad \qquad \qquad \qquad \qquad \quad \forall u,v,w \in V$
    \REPEAT
        \FORALL{$\alpha \in N_u, \beta \in N_v$: $u, v \in V$; $u \neq v$}
            \STATE $\cA^{G,i}_{\alpha, \beta} \leftarrow 1-(1-C_A)\delta_{rwd}(\alpha,\beta)$ \\
            $\qquad \qquad \:\, - \; C_A\delta_{EMD}(\alpha, \beta; 1-\cS^{G,i-1})$
        \ENDFOR
        \FORALL{$u, v \in V$; $N_u \neq \emptyset$; $N_v \neq \emptyset$}
            \STATE $\cS_{u, v}^{G, i} \leftarrow C_S(1 - \delta_{Haus}(u, v; 1 - \cA^G))$
        \ENDFOR
    \UNTIL $\cS^G$ and $\cA^G$ converge 
    \RETURN $(\cS^{G, *}, \cA^{G, *})$ 
    
    \COMMENT{$\cS^{G, *}$ = state similarity matrix, 
    $\cA^{G, *}$ = action similarity matrix}
    \end{algorithmic}
\end{algorithm}

\subsection{Comparison with Original Algorithm}

Before proceeding, we make a few remarks on Algorithm \ref{alg:song-based}, in particular on how it differs from SS1. 

In Algorithm \ref{alg:song-based} we insist that the matrices $\cA^{G,0}$ and $\cS^{G,0}$ are initialized such that $1-\cA^{G,0}$ and $1-\cS^{G,0}$ are both distances. This is because, in the iterative step, the entries of $\cA^{G,i}$ and $\cS^{G,i}$ are defined by distance functions that take as an input the distance matrix $1-\cA^{G,i-1}$ or $1-\cS^{G,i-1}$. In other words, if $1-\cA^{G,i-1}$ and $1-\cS^{G,i-1}$ distances, then both $1-\cA^{G,i}$ and $1-\cS^{G,i}$ are guaranteed to be distances. This is obvious for $1-\cA^{G,i}$. It remains to check that if $1-\cS^{G,i-1}$ defines a distance on the non-absorbing nodes, then this extends to a distance on all nodes in $\cS_G$. For this, we need only check the triangle inequality. Let $\circ$ indicate an absorbing node and $*$ a non-absorbing node. The triangle inequality can be seen from the following diagrams \bas 
{\xymatrix{
 & {\ast} \ar@{<->}[dr]^1 \\
{\circ} \ar@{<->}[rr]_0 \ar@{<->}[ur]^1 &  & {\circ}  }}
\quad \textrm{ and } \quad 
{\xymatrix{
 & {\circ} \ar@{<->}[dr]^1 \\
{\ast} \ar@{<->}[rr]_{S^{G, i-1}_{*,*}} \ar@{<->}[ur]^1 &  & {\ast}  } }\eas along with the fact that $0 \leq \cS^{G, i-1}_{*,*} \leq 1$. This boundedness comes from the fact that $\delta_{rwd}$, $\delta_{EMD}$, and $\delta_{Haus}$ are bounded between 0 and 1, as are the constants $C_A$ and $C_S$, due to the fact that rewards are constrained to values between 0 and 1. Thus we may conclude that $1-\cA^{G,i}$ and $1-\cS^{G,i}$ are distances. 

We are unsure why these conditions were not specified for SS1. The examples the authors provided in their work satisfy these conditions, but in general they do not enforce it.

SS1 also does not specify the value of $d_{uv}$ in the case where neither $u$ or $v$ is an absorbing node. It claims, however, that the entries of $\cA^{G,i}$ and $\cS^{G,i}$ are monotonically increasing. This is not true for all values of $d_{u,v}$. In our algorithm, we universally insist that $d_{u,v}=0$ in the case where neither $u$ or $v$ is an absorbing node. The rationale for this is given in the proof of Lemma \ref{res:goodinitcond}.

\subsection{Properties of \texorpdfstring{$\cS^G$}{} and \texorpdfstring{$\cA^G$}{}}

In general we claim that the matrices $\cS^G$ and $\cA^G$ have the same properties as $\sigma_S*$ and $\sigma_A*$ in SS1, and substantiate those claims here. First we clarify the claim in \cite{wang2019} that the sequences $\{\cA_{a,b}^{G,i}\}_{i \in \N}$ and $\{\cS_{u,v}^{G,i}\}_{i \in \N}$ are monotonically increasing for all $a, b \in \Lambda$ and $u,v \in V$.

\begin{lem} \label{res:goodinitcond} There exists an initial matrix $\cS^{G,0}$ such that the matrices $\cA^{G,i}$ and $\cS^{G,i}$ are monotonically increasing. \end{lem}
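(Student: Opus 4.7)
The plan is to prove existence constructively by exhibiting an initialization that is pointwise-minimal on the entries that the iteration will actually update, and then propagate monotonicity through the recursion by induction. First I would pin down the free parameters in the base case: take $\omega = C_S$ (as already stated in the excerpt) and choose the genuinely free parameter $d_{uv}$ to be $0$ for every non-absorbing pair with $u \neq v$, initializing $\cA^{G,0}$ analogously. With these choices, $\cS^{G,0}$ and $\cA^{G,0}$ are entrywise as small as possible on every position the algorithm's FORALL loops modify, while the diagonal stays at $1$, the absorbing-absorbing entries stay at $C_S$, and the xor-absorbing entries stay at $0$. The excerpt's discussion of the base cases already shows that $1 - \cS^{G,0}$ is a valid distance under this choice.

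For the base step I would verify $\cS^{G,1} \geq \cS^{G,0}$ and $\cA^{G,1} \geq \cA^{G,0}$ entrywise. Diagonal entries, absorbing-absorbing pairs, and xor-absorbing pairs are not touched by the iteration and so remain unchanged, giving equality. For non-absorbing pairs with $u \neq v$, the initial value is $0$, whereas $\cS^{G,1}_{u,v} \in [0, C_S]$ and $\cA^{G,1}_{\alpha,\beta} \in [0,1]$ by the range bounds on $\delta_{Haus}$, $\delta_{rwd}$, and $\delta_{EMD}$ noted earlier in the paper. Hence $\cS^{G,1} \geq \cS^{G,0}$ and $\cA^{G,1} \geq \cA^{G,0}$.

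For the inductive step, assume $\cS^{G,i-1} \leq \cS^{G,i}$ and $\cA^{G,i-1} \leq \cA^{G,i}$ pointwise. The key observation is that both $\delta_{Haus}(\cdot, \cdot; \delta_A)$ and $\delta_{EMD}(\cdot, \cdot; \delta_S)$ are monotone non-decreasing in their ground-metric argument: for $\delta_{EMD}$, because the feasible polytope of transport plans $\mathbf{F}$ does not depend on $\delta_S$, so a pointwise decrease of the objective coefficients can only decrease the linear minimum; for $\delta_{Haus}$, because it is a max-min over pointwise distances. Since $1 - \cS^{G,i} \leq 1 - \cS^{G,i-1}$ and $1 - \cA^{G,i} \leq 1 - \cA^{G,i-1}$ pointwise, plugging into the update equations \eqref{eq:state_sim} and \eqref{eq:action_sim} yields $\cA^{G,i+1} \geq \cA^{G,i}$ and $\cS^{G,i+1} \geq \cS^{G,i}$, completing the induction.

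The most delicate point, and the reason the free parameter must be pinned at exactly $0$, is the base step: any strictly positive value of $d_{uv}$ could exceed $\cS^{G,1}_{u,v} = C_S(1 - \delta_{Haus})$, destroying monotonicity at the very first iterate. The monotonicity of $\delta_{EMD}$ in its ground distance is conceptually the main technical ingredient, but once the fixed-polytope observation is in hand it is immediate.
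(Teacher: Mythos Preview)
Your proposal is correct and follows essentially the same approach as the paper: choose $d_{uv}=0$ so that $\cS^{G,0}\le\cS^{G,1}$ is forced by the nonnegativity of the update, and then propagate monotonicity inductively using that both $\delta_{EMD}$ and $\delta_{Haus}$ are monotone in their ground-metric argument. Your write-up is in fact more explicit than the paper's --- you spell out the fixed-polytope reason for EMD monotonicity and separately treat the unchanged (diagonal, absorbing) entries --- but the underlying argument is the same.
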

\begin{proof}
Note that $1 - \cA_{\alpha,\beta}^{G,i}$ is a weighted average of a fixed reward value and the earth mover's distance of the transition probabilities of $\alpha$ and $\beta$. Then $1 - \cS_{u,v}^{G,i}$ is just the lowest highest value for some set of these weighted sums. If the entries of $1 - \cA_{\alpha,\beta}^{G,i}$ are less than the entries of $1 - \cA_{\alpha,\beta}^{G,i-1}$, then the same can be said for the entries of $1 - \cS_{u,v}^{G,i}$. Furthermore, as the transition probabilities are fixed, the entries of $1 - \cA_{\alpha,\beta}^{G,i}$ are less than the entries of $1 - \cA_{\alpha,\beta}^{G,i-1}$ if and only if the entries of $1 - \cS_{u,v}^{G,i}$ are smaller than those of $1 - \cS_{u,v}^{G,i-1}$. Therefore, it is imperative all entries of $1 - \cS_{u,v}^{G,0}$ exceed all entries of $1 - \cS_{u,v}^{G,1}$. While there are non-trivial assignments that will satisfy this for any given graph $G$, this can be universally assured by setting $d_{u,v} = 0$.
\end{proof}

\begin{cor} With the correct initial conditions, the entries of these matrices are guaranteed to converge.
\end{cor}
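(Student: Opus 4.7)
The plan is to derive the Corollary as a direct consequence of the monotonicity established in Lemma \ref{res:goodinitcond} together with a uniform upper bound on the entries of $\cS^{G,i}$ and $\cA^{G,i}$, and then invoke the Monotone Convergence Theorem entry-by-entry.

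First I would fix any pair of nodes $u,v \in V$ (respectively any pair of action nodes $\alpha,\beta \in \Lambda$) and consider the scalar sequence $\{\cS_{u,v}^{G,i}\}_{i \in \N}$ (resp. $\{\cA_{\alpha,\beta}^{G,i}\}_{i \in \N}$). Lemma \ref{res:goodinitcond} guarantees that, under the chosen initialization ($d_{u,v}=0$ for non-absorbing pairs and the triangle-inequality hypothesis on $1-\cS^{G,0}$, $1-\cA^{G,0}$), each such sequence is monotonically nondecreasing. So the only remaining ingredient is a uniform upper bound.

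For the upper bound, I would argue inductively that every entry of $\cS^{G,i}$ and $\cA^{G,i}$ lies in $[0,1]$. Equations \eqref{eq:state_sim} and \eqref{eq:action_sim} show this immediately: $\delta_{rwd}$, $\delta_{Haus}$, and $\delta_{EMD}$ all take values in $[0,1]$ whenever their inputs are distances bounded by $1$ (a fact already noted in the excerpt following the triangle-inequality diagrams), and the convex combinations defining $\cA^{G,i}_{\alpha,\beta}$ and $\cS^{G,i}_{u,v}$ then produce values in $[0,1]$ as well. Since $C_S \in (0,1)$, the state update in fact satisfies $\cS^{G,i}_{u,v} \le C_S < 1$, and the action update likewise satisfies $\cA^{G,i}_{\alpha,\beta} \le 1$. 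This gives the required uniform bound.

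With monotonicity and a uniform upper bound in hand, the Monotone Convergence Theorem applied to each scalar sequence yields a limit $\cS^{G,*}_{u,v} := \lim_{i\to\infty} \cS^{G,i}_{u,v}$ and similarly $\cA^{G,*}_{\alpha,\beta}$. Because $V$ and $\Lambda$ are finite, entry-wise convergence is equivalent to convergence of the matrices in any matrix norm, so $\cS^{G,i} \to \cS^{G,*}$ and $\cA^{G,i} \to \cA^{G,*}$ as claimed. The only subtlety I anticipate is being careful that the inductive argument for the upper bound also covers the base cases specified in the excerpt (the values $1$, $\omega = C_S$, and $0$ for absorbing/non-absorbing combinations), which is straightforward since all of these already lie in $[0,1]$; no real obstacle here, since both monotonicity and boundedness follow from structure already laid out in the preceding subsections.
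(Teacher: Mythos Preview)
Your proposal is correct and follows essentially the same approach as the paper: the paper's proof is a single sentence stating that convergence ``comes directly from monotonicity and the boundedness of these values,'' and you have simply spelled out those two ingredients and the appeal to the Monotone Convergence Theorem in more detail.
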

\begin{proof}
This is a comes directly from monotonicity and the boundedness of these values
\end{proof}

\begin{thm}
Algorithm \ref{alg:song-based} defines a distance function on the state nodes of two graph MDPs, and a pseudo-metric on the corresponding action nodes. 
\end{thm}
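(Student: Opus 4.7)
The plan is to argue by induction on the iteration index $i$ that $1 - \cS^{G,i}$ satisfies the metric axioms on the state nodes and $1 - \cA^{G,i}$ satisfies the pseudo-metric axioms on the action nodes, then promote the statement to the limit using the preceding corollary on convergence. Non-negativity, symmetry, and the triangle inequality are all closed under pointwise limits of bounded real-valued sequences, so verifying these axioms at every finite $i$ is enough to conclude the same for $\cS^{G,*}$ and $\cA^{G,*}$.

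The base case is delivered by the \texttt{ENSURE} clauses of Algorithm~\ref{alg:song-based} together with the choice $\omega = C_S$: at $i = 0$ the matrices $1 - \cS^{G,0}$ and $1 - \cA^{G,0}$ are required to be distances, and under the natural convention (consistent with the Base Cases subsection) that the diagonal is pinned at $1$ throughout the iteration, reflexivity is preserved. For the inductive step I would treat the two updates separately. The action expression rewrites as $1 - \cA^{G,i}_{\alpha,\beta} = (1-C_A)\,\delta_{rwd}(\alpha,\beta) + C_A\,\delta_{EMD}\bigl(\alpha,\beta;\,1-\cS^{G,i-1}\bigr)$, a convex combination of the reward-difference term (a pseudo-metric on actions by inspection) and the Kantorovich distance lifted from the pseudo-metric $1 - \cS^{G,i-1}$ on $V$; the Kantorovich lift of a pseudo-metric is a pseudo-metric on the simplex of distributions over $V$, and convex combinations of pseudo-metrics remain pseudo-metrics, so $1 - \cA^{G,i}$ has the required properties. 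For states, $1 - \cS^{G,i}_{u,v}$ off the diagonal is an affine function of $\delta_{Haus}(u,v;\,1-\cA^{G,i-1})$, and the Hausdorff construction preserves the triangle inequality and symmetry whenever its input is a pseudo-metric, which we have just established; the absorbing/non-absorbing boundary cases are exactly the two diagrams already drawn in the Comparison with Original Algorithm subsection, and they cover the triangles that mix absorbing with non-absorbing nodes.

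The main obstacle is a careful verification of the two classical lifting results invoked above---that the Kantorovich cost lifts a pseudo-metric on a finite set to a pseudo-metric on the simplex of probability measures, and that the Hausdorff operator lifts a pseudo-metric to a pseudo-metric on nonempty subsets---together with the boundary bookkeeping for empty out-neighborhoods using the $0/1$ conventions from the base cases and the boundedness $1 - \cS^{G,i-1}_{u,v}, 1 - \cA^{G,i-1}_{\alpha,\beta} \in [0,1]$ inherited from Lemma~\ref{res:goodinitcond}. A secondary point to justify the asymmetric conclusion of the theorem: the action result is only a pseudo-metric because two distinct action nodes may share both expected reward and transition distribution, forcing $1 - \cA^{G,*}_{\alpha,\beta} = 0$ with $\alpha \neq \beta$, whereas on the state side the diagonal convention together with the structure of the Hausdorff step rule out such coincidences between distinct states, upgrading the conclusion to a genuine distance.
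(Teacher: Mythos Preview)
Your proposal is correct in spirit and would work, but it takes a markedly different route from the paper. The paper's own proof is a two-sentence appeal to Theorem~2 of \cite{wang2019} (SS1), observing that that theorem imposes no connectivity assumption on the underlying MDP graph, so it applies unchanged to the disjoint union $G = G_M \sqcup G_N$. Your argument, by contrast, is self-contained: you re-derive the metric properties by induction on the iteration index, invoking the classical facts that the Kantorovich and Hausdorff lifts preserve pseudo-metric structure and then passing to the limit. This buys independence from the SS1 reference at the cost of considerably more bookkeeping; the paper buys brevity at the cost of outsourcing the real work.

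One point where your argument is slightly loose: your justification that the state-level result is a genuine metric rather than a pseudo-metric (``the diagonal convention together with the structure of the Hausdorff step rule out such coincidences'') is not quite right as phrased. The Hausdorff distance itself can vanish between distinct states with isomorphic out-neighbor structure. What actually enforces separation is the coefficient $C_S \in (0,1)$ in equation~\eqref{eq:state_sim}: for $u \neq v$ the update yields $\cS^{G,i}_{u,v} = C_S\bigl(1 - \delta_{Haus}\bigr) \le C_S < 1$, so $1 - \cS^{G,i}_{u,v} \ge 1 - C_S > 0$, while the diagonal is pinned at $1$ throughout. Making this explicit would complete your argument; the paper makes the same observation in the paragraph immediately following the theorem.
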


\begin{proof}
As Theorem 2 of SS1 does not specify that the graph must be connected. Therefore, the proof holds for this disconnected case at hand. 
\end{proof}

The only case that needs to be understood here is what happens if the graph $G  = G_M \sqcup G_N$ is the disjoint union of two identical graphs ($G_M \simeq G_N$). In this case, one initializes $S^{G,0}$ to be the identity matrix. Then for the corresponding isomorphic actions $\alpha_M$ and $\alpha_N$ in the components $G_M$ and $G_N$ respectively, $\cA_{\alpha_M, \alpha_N}^{G,1} = 1$. This is because the rewards and the transition probabilities are identical for these two functions. Therefore, if $v_M$ and $v_N$ are the source nodes for these actions, respectively, $\cS_{v_M, v_N}^{G,1} = C_S$. By similar argument, we see that these values do not change for any $i$. Therefore, the value $C_S$ denotes the discount one puts on two different states being isomorphic but distinct.

\section{Algorithm Simplification}
\label{sec:alg-simp}

In the previous section we show that the matrix $1 - \cS^G$ is a distance metric for two disconnected components $G_M \simeq G_N = G$, so the submatrix $\cS^{M, N}$ is a distance metric between the states in MDP $M$ and MDP $N$, of which we are chiefly interested. In fact, we need not compute all of $\cS^G$ in order to obtain $\cS^{M, N}$, we can simply compute it directly by only considering $u \in V_M$ and $v \in V_N$ in both \texttt{for} loops. In this case, we initialize $\cS^{M, N}$ and $\cA^{M, N}$ ($\cS^G$'s and $\cA^G$'s respective replacements in algorithm \ref{alg:song-based}) as $C_S \cdot \mathbf{I}$, where $\mathbf{I}$ is the identity matrix. 

This is justified by examining the data dependencies of each Hausdorff (as used to compute entries in $\cS^G$) and Kantorovich (as used to compute entries in $\cA^G$) operation. In the former, note that while the entirety of the $\cA^G$ matrix is passed in as the distance function, only the subset of that matrix where the rows are the actions of state $u$ and columns are the actions of state $v$ is actually used in computing the entry at $\cS^G_{u,v}$. That is to say, computing the state similarity within each sub-matrix only relies on the action similarities of actions belonging to states within that sub-matrix.

Furthermore, these entries in $\cA^G$ rely on the Kantorovich distances between the action $\alpha$ and $\beta$ from $u$ and $v$ respectively. When expanding $\alpha$ into a full probability distribution, note that the probability of transitioning from MDP $M$ to $N$ (or vice versa) via any action is $0$. Thus, those entries in the distribution have $0$ weight associated with them, and are disregarded when computing the overall distance. Thus, only the weights in $\cA^G$ corresponding to the sub-matrix in which $u$ and $v$ belong to have any bearing on the computation, and each sub-matrix can therefore be computed independently.

Regarding run-time, the big-O complexity of SS1 still holds for this simplification, as it still scales with the number of nodes in the set of state nodes and action nodes in the graph MDP; however, the memory and computation reduction from reducing the $\cS^G$ matrix ($|S_M + S_N|^2$ entries) to $\cS_{M, N}$ ($|S_M||S_N|$ entries) can be significant in practice. In essence the complexity has changed from $O(N_{iter} \cdot |S|^2 |A|^2 K_{max}^2 / \epsilon^2)$ to $O(N_{iter} \cdot |S_M||S_N| |A|^2 K_{max}^2/\epsilon^2)$, where $|S_M| < |S|$ and $|S_N| < |S|$, where $N_{iter}$ is the number of iterations the algorithms runs before converging and $K_{max} \leq |V|$ is the maximum out-degree of action nodes in $G$. 

\begin{figure}[t]
    \centering
    \includegraphics[width=0.45\textwidth]{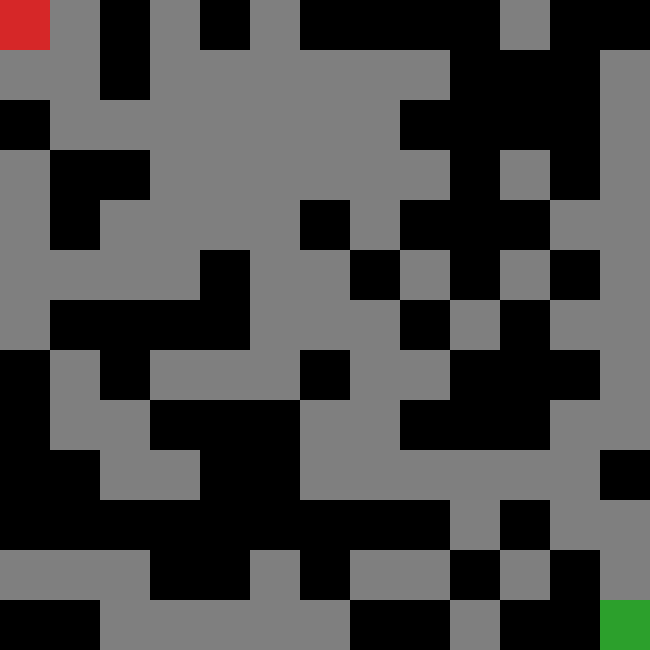}
    \caption{An example maze used in the transfer experiments. The agent begins in the upper-left red state and must navigate to the lower-right green state.}
    \label{fig:example_grid}
\end{figure}

\section{Experiment Methodology} \label{sec:experiment}

We validate SS2 by conducting several experiments wherein we assess the relationship between its similarity measurements and transfer performance. In general, our setting consists of finite MDPs representing various gridworld environments (Figure~\ref{fig:example_grid}). In each experiment, we consider a given target environment and a corpus of source environments, each environment consisting of a single start state and single goal state. We then utilize table-based Q Learning to train an agent to 90\% optimal performance, as measured by the moving average of number of steps required to complete an episode compared against the optimal path length \cite{watkins1992q}. Finally, we use the transfer methods proposed and assessed in \cite{song2016}, as well as a novel expansion of those methods that utilizes the action similarities ($\cA^{G,*}$) captured by SS2, and assess the performance of each source environment to the fixed target environment.

\subsection{Environment Generation}
We define our tasks by randomly generating a target environment and $100$ source environments. The goal of the task is to navigate the environment from the top leftmost square to the bottom rightmost square, at which point the agent is given a positive reward. Each environment generated is populated with randomly distributed obstacles, but ensures that there exists at least one path to the goal. Additionally, we repeat this generation process while modifying three independent variables to better understand how different conditions affect performance. These variables, and their associated settings, are as follows:
\begin{enumerate}
    \item Grid size: either ``Small'' (9x9) or ``Large'' (13x13)
    \item Rotations: whether or not to include random rotations when generating the source tasks. If included, then for each source task the goal state is chosen to be one of four random corners, with the start state set to the opposite corner along the diagonal. 
    \item Reward: either $1$ or $100$, associated with the goal state. All other states have no rewards or costs associated.
\end{enumerate}
This results in a total of eight experimental conditions, each with their own target task and set of $100$ source tasks. Additionally, note that in these transfer experiments, the MDPs generated are deterministic, unless otherwise specified.

\subsection{Agent Training}

We use a simple version of Q-Learning~\cite{watkins1992q, qlearnmedium} when training the initial source agents. We first set an optimality criterion by performing a breadth-first search from the environment's start state to goal state, where there is unit cost in moving to an adjacent state. We then use a learning rate $\alpha = 0.1$, discount factor $\gamma = 0.9$, and an $\epsilon$-greedy strategy starting at $\epsilon = 0.9$ to train each agent until the moving average of its last 20 episodes has reached 90\% optimal. We utilize a simple form of simulated annealing by decreasing the $\epsilon$ value by a constant decay amount of $10^{-6}$, with a minimum value of $0.1$. The agents tend to converge after approximately $5\times10^5$ steps.

\subsection{Transfer Methodology}

\begin{algorithm}[t]
\caption{Weight Transfer with Action (\texttt{T-AVG-ACT})}\label{alg:weight_action}
    \begin{algorithmic}[1]
    \REQUIRE $\delta_{\hat{S}}$, $\delta_{\hat{A}}$ 
    \REQUIRE $Q_{in} \in \mathbb{R}^{|S_{in}| \times |A_{in}|}$ 
    \REQUIRE $N_u = N_v$, $\forall u,v \in S_{in} \times S_{out} $
    
    \STATE $Q_{out} \leftarrow \mathbf{0}^{|S_{out}| \times |A_{out}}$
    \STATE $K \leftarrow EMD(U_{|S_{in}}, U_{|S_{out}|}, \delta_{\hat{S}})$
    
    \FORALL{$s_i \in S_{in}$, $s_o \in S_{out}$}
       \STATE $w \leftarrow K[s_i, s_o] / (\sum_{s_o^\prime \in S_{out}} K[s_i, s_o^\prime])$ 
       \STATE $A_{sub} \leftarrow \delta_{\hat{A}}[N_{s_i}, N_{s_o}]$
       \FORALL {$a_o \in N_{s_o}$}
          \STATE $a_m \leftarrow \argmin A_{sub}[:, a_o]$
          \IF{$A_{sub}[a_o, a_o] = A_{sub}[a_m, a_o]$}
            \STATE $a \leftarrow a_o$
          \ELSE
            \STATE $a \leftarrow a_m$
          \ENDIF
          \STATE $Q_{out}[s_o, a_o] \leftarrow w \times Q_{in}[s_i, a]$
       \ENDFOR
    \ENDFOR
    \RETURN $Q_{out}$
    \end{algorithmic}
\end{algorithm}

\begin{algorithm}[t]
\caption{State Transfer with Action (\texttt{T-STATE-ACT})}\label{alg:state_action}
    \begin{algorithmic}[1]
    \REQUIRE $\delta_{\hat{S}}$, $\delta_{\hat{A}}$ 
    \REQUIRE $Q_{in} \in \mathbb{R}^{|S_{in}| \times |A_{in}|}$ 
    \REQUIRE $N_u = N_v$, $\forall u,v \in S_{in} \times S_{out} $
    
    \STATE $Q_{out} \leftarrow \mathbf{0}^{|S_{out}| \times |A_{out}}$
    
    \FORALL{$s_i \in S_{in}$, $s_o \in S_{out}$}
       \IF{$\delta_{\hat{S}}[s_i, s_o] = \min \delta_{\hat{S}}[:, s_o]$}
          \STATE $w \leftarrow 1$
       \ELSE
          \STATE $w \leftarrow 0$
       \ENDIF
       \STATE $A_{sub} \leftarrow \delta_{\hat{A}}[N_{s_i}, N_{s_o}]$
       \FORALL {$a_o \in N_{s_o}$}
          \STATE $a_m \leftarrow \argmin A_{sub}[:, a_o]$
          \IF{$A_{sub}[a_o, a_o] = A_{sub}[a_m, a_o]$}
            \STATE $a \leftarrow a_o$
          \ELSE
            \STATE $a \leftarrow a_m$
          \ENDIF
          \STATE $Q_{out}[s_o, a_o] \leftarrow w \times Q_{in}[s_i, a]$
       \ENDFOR
    \ENDFOR
    \RETURN $Q_{out}$
    \end{algorithmic}
\end{algorithm}

\begin{table}
\centering
\caption{Anova Results. Reported p-values are Bonferroni corrected; Reward is omitted as its p-value was 1.0000 in all cases.}
\label{tab:anova_res}
\begin{tabular}{lrr}

{} &  Dimension P-Value &  Rotate P-Value \\
Algorithm        &                    &                 \\

SS2, T-STATE-ACT &         4.9812e-02 &      1.0000e+00 \\
SS2, T-AVG-ACT   &         3.9507e-30 &      1.0000e+00 \\
SS2, T-STATE     &         1.1683e-30 &      1.2733e-22 \\
SS2, T-AVG       &         1.2963e-48 &      6.3088e-01 \\
Song, T-STATE    &         3.8663e-23 &      3.1010e-44 \\
Song, T-AVG      &         2.1109e-63 &      3.6490e-01 \\
Uniform, T-STATE &         8.9806e-07 &      3.7355e-45 \\
Uniform, T-AVG   &         1.0000e+00 &      5.6624e-73 \\

\end{tabular}
\end{table}

We consider metric-method pairs in a factorial design. In these experiments, we utilize SS2, as well as the similarity metric, $d'$, defined by Song et al. in \cite{song2016} as a point of comparison, as it is similar to ours. Song's similarity metric is also computed recursively,  following Equation \ref{eq:song-metric}.
\begin{equation}
    \label{eq:song-metric}
    \begin{aligned}
    d'(s, s') = \max_{a \in A} |\mathbb{E}[r_s^a] - \mathbb{E}[r_s'^a]| 
        - C\delta_{EMD}(P_s^a, P_s'^a; d')
    \end{aligned}
\end{equation}
In Equation \ref{eq:song-metric}, $\mathbb{E}[r_s^a]$ is the expected reward for taking action $a$ at state $s$,  $P_s^a$ is the transition probability distribution for taking action $a$ at state $s$, and $C$ is a user defined constant. We also include a baseline \texttt{Uniform} metric which returns a constant distance from each source state to target state. For fairness of comparison, we ensure that the convergence criteria for each metric is identical, as both our metric and Song's utilize fixed point iterations to determine when the matrices have stopped changing. This criterion is satisfied when all pairwise comparisons between the previous iteration's distance values $d$ and current iteration's distance values $d'$ meet the following condition: $|d' - d| \leq 10^{-8} + 10^{-5}\times |d|$.

Similarly, we ensure that constants that are common to both algorithms (i.e. Song's analogue of $C_A$) have the same value. In this case, we set $C_A=0.5$. We also initialize $C_S = 0.9995$, but note that $d'$ has no corresponding constant.

We use methods described in Song's work, namely \texttt{Weight} and \texttt{State} transfer~\cite{song2016} to transfer agent knowledge from one task for use in another. The \texttt{Weight} transfer method (not to be confused with neural network weights) initializes the Q function, or value function, of the target task by using the values of the Q-function (or value function) in the source task(s). Each contribution is weighed by the corresponding entry of the optimal transport matrix computed by the Kantorovich metric upon the State-State distance matrix. As stated in \ref{sec:intro}, we refer to this method as \texttt{T-AVG}. The \texttt{State} transfer, which we will henceforth refer to as \texttt{T-STATE} is similar, but simply sets the Q-function (or value function) for each state by selecting the entry of the source task corresponding to the state that is least distant to the target state. That is, only the most similar source state to each target state will contribute to its initialization. 

We then extend these methods by using the action similarity matrix, ($\cA^G$) to better inform mappings between actions in each source-target state pair, as described in Algorithm \ref{alg:weight_action} (\texttt{T-AVG-ACT}). In performing each transfer experiment, we utilize the specified metric to obtain a distance matrix $\delta_{\hat{S}}$ between states in each source environment and the fixed target environment, as well as a corresponding distance matrix $\delta_{\hat{A}}$ between actions.

Intuitively then, Algorithm \ref{alg:weight_action} can be seen as assigning the minimally distant source action to each target action, breaking ties by the original action ordering alignment. Note that this action-space correspondence precondition is required by Song's metric, but we are only using it as a heuristic to break ties. Algorithm \ref{alg:state_action} (\texttt{T-STATE-ACT}) is quite similar, except it weighs only the least distant State-State pair, similar to \texttt{T-STATE}.

Finally, we then use each of the transfer methods to initialize the Q table for the target environment. Note that while all four methods can support transferring information from multiple source tasks to a single target task, we only consider the one-to-one case in this experiment. We train the resulting agents for $10^3$ steps, with a fixed $\epsilon$ value of $0.1$ (other hyper-parameters are held the same as when training the source tasks). We repeat this procedure for $50$ trials, to reduce variance that results from $\epsilon$-greedy exploration, and store each resulting performance curve.

Note that the transfer variants that use the action similarity matrix are only applicable to our algorithm, since $d'$ only computes state similarity. This results in a total of eight metric-method combinations.

\section{Results}

\subsection{Transfer Performance}

\begin{figure}
\includegraphics[width=0.48\textwidth]{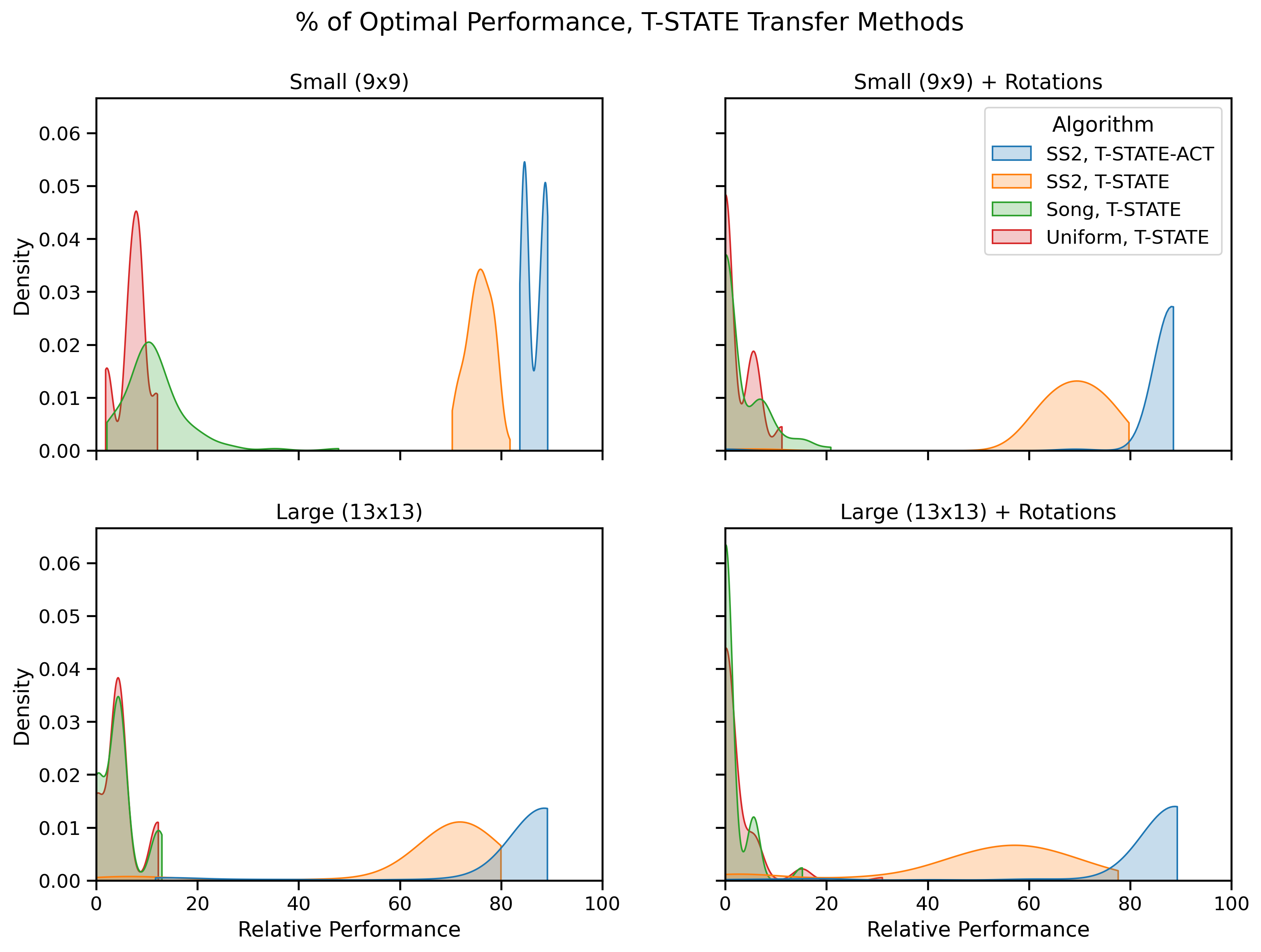}
\includegraphics[width=0.48\textwidth]{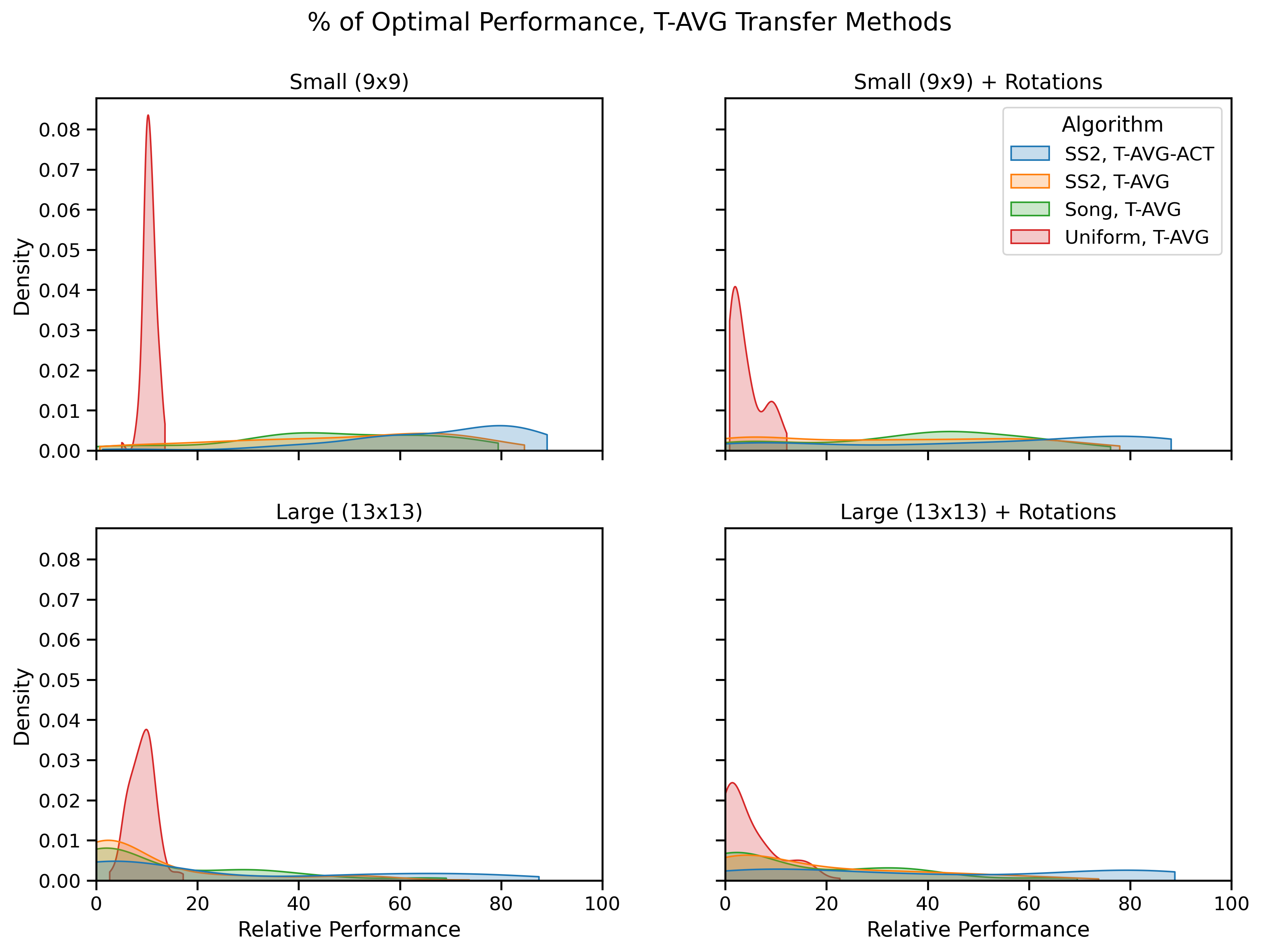}
\caption{Distributions of agent performance over all source tasks, averaged over 50 trials. The x-axis represents the average number of episodes completed by an agent in the first 2500 steps of training divided by the number of episodes completed by an optimal agent in the same number of steps, which we call \texttt{Relative Performance}. In essence, farther right indicates closer to optimal performance.}
\centering
\label{fig:dist_rel_best}
\end{figure}

We begin by calculating the \texttt{Average Episodes Completed} across the $50$ trials for each (source, metric-method, experiment) tuple, which is simply the mean number of episodes the agents were able to complete in a chosen measurement period (in our case, the first $2500$ steps of training). This enables us to compute an \texttt{Episode Performance} which is the average number of steps per episode observed (i.e. $2500$ $/$ \texttt{Average Episodes Completed}). We then calculate a \texttt{Relative Performance} by comparing this \texttt{Episode Performance} to the optimal step count in the target environment, via \texttt{Optimal Length} $/$ \texttt{Episode Performance}, and converting to a percentage by multiplying by $100$.

We conduct an Analysis of Variance (ANOVA) test for each of the three independent factors used in creating the environments \cite{st1989analysis}. In these tests, the dependent variable was the \texttt{Relative Performance}.
The Bonferroni corrected p-values can be seen in Table \ref{tab:anova_res} \cite{bonferroni1936teoria}. As shown, for most metric-method pairs, both grid size (\texttt{Dimension}) and the presence of rotations (\texttt{Rotate}) were statistically significant in affecting transfer performance, while the reward amount (\texttt{Reward}) was not significant for any pair. Thus, in the remainder of our analyses and visualizations, we focus only on grid size and presence of rotations, using the results in the reward $ = 100$ experiments to most directly compare with \texttt{T-AVG} and \texttt{T-STATE}.

The full results consisting of the mean and standard deviations for each metric-method in each experiment can be seen in Table \ref{tab:full_results}. 
Interestingly, across all conditions for SS2, \texttt{T-STATE-ACT} dominated \texttt{T-AVG-ACT} and \texttt{T-STATE} dominated \texttt{T-AVG}. However, for \texttt{Uniform} and $d'$ similarities, \texttt{T-AVG} transfer dominated \texttt{T-STATE} transfer.

We thus choose to highlight these four metric-method pairs, as they are the best methods for each metric:
\begin{enumerate}
    \item \texttt{SS2} Metric, \texttt{T-STATE-ACT} Transfer
    \item \texttt{SS2} Metric, \texttt{T-STATE} Transfer
    \item $d'$ Metric, \texttt{T-AVG} Transfer 
    \item \texttt{Uniform} Metric, \texttt{T-AVG} Transfer
\end{enumerate}


These results are highlighted in Figures \ref{fig:dist_rel_best} and \ref{fig:curves_best}. Figure \ref{fig:dist_rel_best} highlights the percent of optimal performance able to be achieved in the first 2500 iterations, as a density plot over the 100 source tasks. Figure \ref{fig:curves_best} displays the cumulative number of episodes completed over time, averaged over the 100 source tasks. 
Overall we observe that SS2, with both the \texttt{T-STATE-ACT} and \texttt{T-STATE} transfer methods, significantly outperforms $d'$, both in terms of consistency across the source tasks as well as transfer performance attained. We suspect that this is due to the action distances effectively allowing our algorithm to leverage states which are similar despite their out-actions being misaligned (e.g. states that are one move away from the goal should be similar even if the action to perform that move is different).

\begin{figure*}
\includegraphics[width=\textwidth]{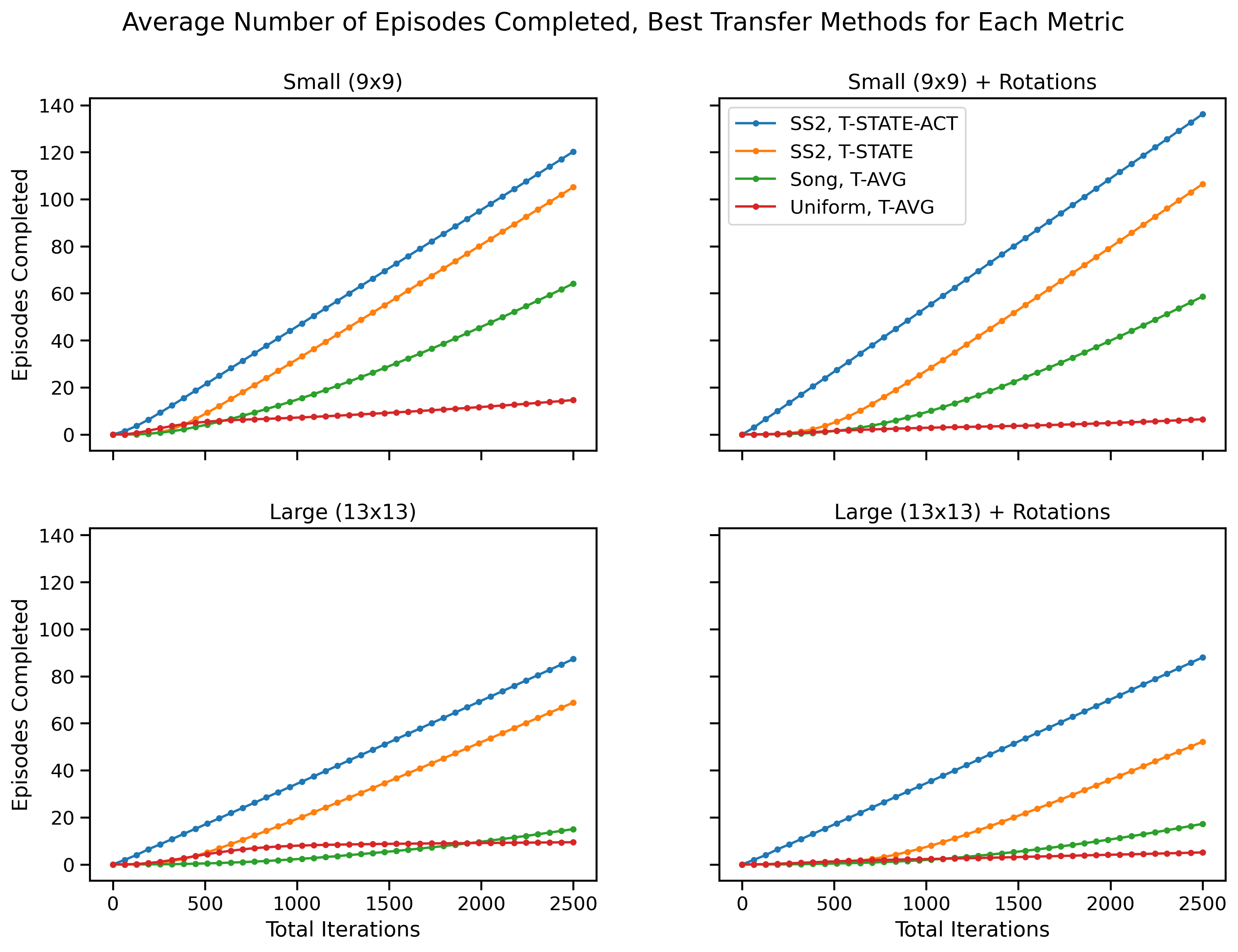}
\caption{Performance curves of best transfer methods for each metric over time. Data points are averaged over 50 trials and 100 source tasks.}
\centering
\label{fig:curves_best}
\end{figure*}

\section{Discussion}

\subsection{Structural Similarity; MDP Distance} \label{sec:dists}

Ideally, we would want a single value to characterize the overall similarity between two MDPs, a straightforward way to do this is as follows:

Using Algorithm \ref{alg:song-based} or a simplified version, assume we may now compute the function $\delta_{\hat{S}} = 1 - \sigma_{\hat{S}} = 1 - \sigma_{S_M \cup S_N} = 1 - \cS^{G, *}$, for MDPs $M=(S_M, A, P_M, R_M)$ and $N=(S_N, A, P_N, R_N)$. We analogously define $\delta_{\hat{A}} = 1 - \cA^{G, *}$. We may now apply the measures used in \cite{song2016} to compute a final similarity measure between the MDPs, i.e. the Hausdorff and Kantorovich metrics. Assuming $M$, $N$ and $S$ defined as stated previously, we state the following definitions:

\begin{dfn}
The \emph{Hausdorff structural similarity distance}, $\Phi$, of $M$ and $N$ is defined as
$$
    \Phi(M, N) = \delta_{Haus}(V_M, V_N, \delta_{\hat{S}}), 
$$
where $V_M$ and $V_N$ are the state nodes of the MDP graphs for $M$ and $N$, respectively.
\end{dfn}
\begin{dfn}
The \emph{Kantorovich structural similarity distance}, $\Psi$, is defined as
$$
    \Psi(M, N) = \delta_{EMD}(U_{|S_M|}, U_{|S_N|}, \delta_{\hat{S}}), 
$$
where $U_{a}$ is a discrete Uniform distribution with $p=\frac{1}{a}$. 
\end{dfn}

We tested these measures using the Q-learning framework we used for \ref{sec:experiment}, but we did not find correlation between transfer performance and MDP distances (for more details, see Appendix~\ref{sec:mdp_dist_appdx}. However, because the state distances computed using SS2 seem to be useful in terms of improving transfer from one task to another, we believe the issue is in the reduction technique, i.e. the use of the Kantorovich and Hausdorff distances. Discovering other techniques to reduce the information from $\cS^{G,*}$ into a single value may be a possible avenue of future research, but another potentially more fruitful avenue might be how to better leverage the similarity information contained in $\cS^{G,*}$ and $\cA^{G,*}$ to improve transfer in other RL tasks and algorithms. 

\subsection{Limitations}
Our results here suggest that using $\cS^{G,*}$ and $\cA^{G,*}$ as similarity characterizations to improve transfer between RL tasks has potential, but we acknowledge it currently has various limitations. The most obvious is that SS2 is limited to finite MDPs that can be fully specified (i.e. complete knowledge of the transition and reward functions), and even in this case, computing $\cS^{G,*}$ and $\cA^{G,*}$ can be get very expensive as the size of the state and action spaces increase. 

Another limitation we found in our experiments is that results are strongly dependent on how one defines the MDP for a given task. For example, DRL often assumes that every action is available to the agent at every step, which is not always the case. Choosing to include action nodes for every action at every state (where unavailable actions would be modeled by say, returning to the current state with probability 1.0) will produce different similarity values than omitting actions nodes where the actions are not available. This is a modeling choice for how to deal with obstacles in the gridworld, but has a significant effect on the similarity values returned by SS2.  

\subsection{Future Work}

Future work of interest on this subject could include performing additional experiments to better characterize the relationship between the state similarities returned by SS2 and transfer performance between tasks. In particular, we would find experiments that include MDP environments other than the traditional gridworld of interest. Another avenue would be assessing transfer in the continual learning setting \cite{ring1994continual, khetarpal2020continual}, even with simple gridworld tasks. How might we leverage similarity measures such as those from SS2 to continuously modify an RL agent over a lifetime of performing a set of tasks? Finally, an area of particular interest would be to extend SS2 to operate on larger MDPs through sampling techniques, such those used by \cite{castro2019scalable, castro2021mico}.

\section{Conclusion}
We proposed the Structural Similarity for two MDPs, or SS2, algorithm for computing state similarities between two MDPs. We proved that the result of SS2 can be used to define a distance metric using results from \cite{wang2019}, upon which SS2 is based. We then showed that by leveraging the state-to-state similarity results from SS2 with RL transfer algorithms from \cite{song2016}, we could improve transfer between tasks in a gridworld navigation environment over those shown in \cite{song2016} for the same tasks. Furthermore, by incorporating action similarity information from SS2 into the transfer algorithm, we further improved transfer performance. 

We also tested task-to-task similarity measures comprising the SS2 outputs and the Hausdorff and Kantorovich distance metrics, and found that transfer performance did not correlate with those similarity scores. Therefore, we conclude that, while the state-to-state similarity values produced by SS2 between two MDPs is useful for transfer, reducing those values into a single scalar one, using the Hausdorff and Kantorovich distances in particular, do not produce useful measures of task similarity for our environment and RL algorithm. Future work should include additional verification of these results with additional tasks and algorithms, and investigate additional uses of information in SS2 for RL tasks. Other natural extensions could include attempting to apply SS2 to larger MDPs via sampling and graph reduction as well as extending SS2 to return state-to-state similarities for any finite number of tasks represented as finite MDPs. 

\section*{Acknowledgements}
This work was funded by the DARPA Lifelong Learning Machines (L2M) Program.  The views, opinions, and/or findings expressed are those of the author(s) and should not be interpreted as representing the official views or policies of the Department of Defense or the U.S. Government.

\printbibliography

\pagebreak
\clearpage

\appendices

\section{MDP Distance and Transfer Correlation} \label{sec:mdp_dist_appdx}

To assess the effectiveness of the MDP Distance, as discussed in Section \ref{sec:dists}, we computed the Pearson's correlation coefficient, for all metric-method combinations in their individual conditions, as well as across all conditions combined \cite{benesty2009pearson}. These results can be seen in full in Tables \ref{tab:corr_res} and \ref{tab:corr_haus_res}, regarding the final Kantorovich and Hausdorff distances respectively. We desired a strong negative correlation (i.e. greater distance implying worse performance). There were two key takeaways from performing this analysis:
\begin{enumerate}
    \item No metric-method resulted in having the desired negative correlation across all conditions combined over all sizes, rewards, and rotations, using either of the two final distance scores. The implication here is that while the distance matrices can effectively be used to perform transfer, they do not seem to be predictive of transfer in the sense of simply ``higher distance'' $\implies$ ``worse transfer''. This is likely because the distances observed in each experiment tended to group together (e.g. having a larger reward in Song's metric would lead to a larger distance, scaled linearly, but not necessarily any difference in transfer performance).
    \item In individual experiments, some metric-method combinations demonstrated correlation, although note again that no combination had a consistent correlation across the conditions. One reason why our metric with its best method (i.e. state and action transfer) did not exhibit correlation could be the fact that regardless of the distance, the transfer performance was quite good. This highlights the need to further experiment in more complex scenarios or perhaps different environments altogether, which we hope to explore in the near future.
\end{enumerate}

\begin{table*}
\centering
\caption{Full Statistical Results. Entries are "mean (std)" for the given algorithm and condition.}
\label{tab:full_results}
\begin{tabular}{lllllllll}

{} &     SS2, AVG & SS2, AVG-ACT &   SS2, STATE & SS2, STATE-ACT &    Song, AVG & Song, STATE & Uniform, AVG & Uniform, STATE \\
Condition     &              &              &              &                &              &             &              &                \\

Lg, R1        &  11.3 (16.6) &  26.1 (29.9) &  66.0 (17.4) &    84.2 (14.9) &  14.6 (18.2) &   4.3 (3.9) &    9.0 (2.5) &      4.6 (3.8) \\
Lg, R1, Rot   &  20.8 (21.2) &  42.2 (33.0) &  51.0 (19.0) &    85.4 (14.8) &  16.0 (18.8) &   1.5 (3.2) &    5.0 (5.5) &      1.6 (3.4) \\
Lg, R100      &  11.1 (17.0) &  25.9 (29.9) &  66.1 (17.5) &    83.8 (16.4) &  14.4 (18.5) &   4.3 (3.9) &    9.1 (2.6) &      4.6 (3.8) \\
Lg, R100, Rot &  18.9 (19.8) &  42.4 (32.7) &  50.1 (19.5) &    84.5 (16.1) &  16.6 (18.9) &   1.4 (3.1) &    4.9 (5.4) &      1.9 (4.5) \\
Sm, R1        &  47.9 (22.3) &  67.1 (18.4) &   75.7 (2.5) &     86.6 (2.0) &  45.9 (21.0) &  11.6 (6.6) &   10.5 (1.2) &      7.2 (2.8) \\
Sm, R1, Rot   &  33.1 (24.5) &  52.0 (30.8) &  68.0 (10.7) &     86.9 (9.6) &  37.4 (20.7) &   2.7 (4.6) &    4.1 (3.2) &      2.2 (3.2) \\
Sm, R100      &  47.9 (22.3) &  67.1 (18.5) &   75.7 (2.5) &     86.6 (2.0) &  46.2 (20.7) &  11.6 (6.6) &   10.5 (1.3) &      7.2 (2.9) \\
Sm, R100, Rot &  32.9 (24.5) &  51.9 (30.6) &  68.1 (10.6) &     87.2 (9.0) &  37.5 (20.9) &   2.8 (4.6) &    4.1 (3.2) &      2.3 (3.3) \\

\end{tabular}
\end{table*}

\begin{table*}
\centering
\caption{Kantarovich Pearson Correlation Results: Desired relation is negative.}
\label{tab:corr_res}
\begin{tabular}{lrrrrrrrr}

{} &  SS2, STATE-ACT &  SS2, AVG-ACT &  SS2, STATE &  SS2, AVG &  Song, STATE &  Song, AVG &  Uniform, STATE &  Uniform, AVG \\
Condition     &                 &               &             &           &              &            &                 &               \\

All           &           0.099 &         0.138 &       0.198 &     0.120 &       -0.011 &      0.117 &           0.211 &         0.095 \\
Lg, R1        &           0.224 &        -0.083 &       0.168 &    -0.109 &       -0.081 &      0.025 &          -0.059 &         0.205 \\
Lg, R1, Rot   &          -0.159 &         0.084 &      -0.148 &    -0.088 &        0.174 &     -0.227 &           0.024 &        -0.015 \\
Lg, R100      &           0.148 &        -0.072 &       0.140 &    -0.123 &       -0.084 &     -0.059 &          -0.060 &         0.190 \\
Lg, R100, Rot &          -0.142 &         0.102 &      -0.135 &    -0.072 &        0.169 &     -0.264 &           0.032 &        -0.010 \\
Sm, R1        &           0.156 &        -0.263 &       0.099 &    -0.343 &       -0.408 &     -0.371 &          -0.115 &         0.230 \\
Sm, R1, Rot   &           0.208 &        -0.113 &       0.149 &    -0.161 &       -0.488 &     -0.382 &          -0.183 &        -0.021 \\
Sm, R100      &           0.145 &        -0.261 &       0.113 &    -0.344 &       -0.408 &     -0.378 &          -0.119 &         0.216 \\
Sm, R100, Rot &           0.237 &        -0.114 &       0.156 &    -0.163 &       -0.474 &     -0.392 &          -0.172 &        -0.023 \\

\end{tabular}
\end{table*}

\begin{table*}
\centering
\caption{Hausdorff Pearson Correlation Results: Desired relation is negative.}
\label{tab:corr_haus_res}
\begin{tabular}{lrrrrrrrr}

{} &  SS2, STATE-ACT &  SS2, AVG-ACT &  SS2, STATE &  SS2, AVG &  Song, STATE &  Song, AVG &  Uniform, STATE &  Uniform, AVG \\
Condition     &                 &               &             &           &              &            &                 &               \\

All           &           0.014 &        -0.046 &      -0.056 &    -0.083 &       -0.042 &     -0.020 &           0.211 &         0.095 \\
Lg, R1        &           0.017 &         0.015 &      -0.001 &    -0.039 &       -0.013 &      0.030 &          -0.059 &         0.205 \\
Lg, R1, Rot   &          -0.101 &         0.090 &      -0.160 &     0.044 &        0.225 &     -0.186 &           0.024 &        -0.015 \\
Lg, R100      &           0.076 &         0.016 &       0.086 &    -0.007 &       -0.016 &      0.051 &          -0.060 &         0.190 \\
Lg, R100, Rot &          -0.141 &         0.074 &      -0.146 &    -0.031 &        0.216 &     -0.228 &           0.032 &        -0.010 \\
Sm, R1        &           0.260 &        -0.323 &       0.158 &    -0.222 &       -0.233 &     -0.275 &          -0.115 &         0.230 \\
Sm, R1, Rot   &           0.150 &        -0.230 &       0.201 &    -0.188 &       -0.431 &     -0.244 &          -0.183 &        -0.021 \\
Sm, R100      &           0.235 &        -0.320 &       0.171 &    -0.223 &       -0.233 &     -0.266 &          -0.119 &         0.216 \\
Sm, R100, Rot &           0.162 &        -0.229 &       0.200 &    -0.186 &       -0.425 &     -0.263 &          -0.172 &        -0.023 \\

\end{tabular}
\end{table*}

\end{document}